\long\def\comment#1{}
\newfont{\bbb}{msbm10 scaled 700}
\newfont{\bb}{msbm10 scaled 1100}
\newcommand{\xv}{{\bf x}}
\newcommand{\yv}{{\bf y}}
\newcommand{\Dm}{{\bf D}}
\newcommand{\Hm}{{\bf H}}
\newcommand{\Id}{{\bf I}}
\newcommand{\Um}{{\bf U}}
\newcommand{\Wm}{{\bf W}}
\newcommand{\Lam}{{\bf \Lambda}}
\newcommand{\Lcb}{{\bm {\mathcal L}}}
\newcommand{\mb}[1]{{\mathbf{#1}}}
\newcommand{\mc}[1]{{\mathcal{#1}}}
\newcommand{\mcb}[1]{{\bm{\mathcal{#1}}}}
\newtheorem{theorem}{Theorem}[section]
\title{Bilateral Filter: Graph Spectral Interpretation and Extensions}
\author{Akshay Gadde, Sunil K Narang and Antonio Ortega \\
Ming Hsieh Department of Electrical Engineering\\
         University of Southern California\\
         agadde@usc.edu, kumarsun@usc.edu, ortega@sipi.usc.edu
\thanks{This work was supported in part by NSF under grant CCF-1018977.}}
\begin{document}
\maketitle
\begin{abstract}
In this paper we study the bilateral filter proposed by Tomasi and Manduchi, as a spectral domain transform defined on a weighted graph. The nodes of this graph represent the pixels in the image and a graph signal defined on the nodes represents the intensity values. Edge weights in the graph correspond to the bilateral filter coefficients and hence are data adaptive. Spectrum of a graph is defined in terms of the eigenvalues and eigenvectors of the graph Laplacian matrix. We use this spectral interpretation to generalize the bilateral filter and propose more flexible and application specific spectral designs of bilateral-like filters. We show that these spectral filters can be implemented with $k$-iterative bilateral filtering operations and do not require expensive diagonalization of the Laplacian matrix.
\end{abstract}
\begin{keywords}
Bilateral filter, graph based signal processing, polynomial approximation
\end{keywords}
\section{Introduction}
The bilateral filter (BF) proposed by Tomasi and Manduchi~\cite{Tomasi'98} has emerged as a powerful tool for adaptive processing of multidimensional data. Bilateral filtering smooths images while preserving edges, by taking the weighted average of the nearby pixels. The weights depend on both the spatial distance and photometric distance which provides local adaptivity to the given data. The bilateral filter and its variants are widely used in different applications such as denoising, edge preserving multi-scale decomposition, detail enhancement or reduction and segmentation etc.~\cite{ParisBF, BFdenoise,  multiscaleBF, durand, graphCutFilter}.
Bilateral filtering was developed as an intuitive tool without theoretical justification. Since then connections between the BF and other well known filtering frameworks such as anisotropic diffusion, weighted least squares, Bayesian methods, kernel regression and non-local means have been explored~\cite{Elad'02, Barash, Milanfar'13, Singer, Baudes}.

The BF is data dependent and hence a non-linear and non-shift invariant filter. So, it does not have a spectral interpretation in the traditional frequency domain of images. However we would like to have spectral interpretation so that we can modify the global properties of the signal by changing its frequency components. To overcome this difficulty, we view the BF as a vertex domain transform on a graph with pixels as vertices, intensity values of each node as graph signal and filter coefficients as link weights that capture the similarity between nodes. This graphical views of the BF is used in~\cite{Peyre'08, Milanfar'13, Noel'12, graphCutFilter, Jian'09} but spectral design of filters for images using this graph has not been studied.

We can define spectral filters on these graphs where spectral response is calculated in terms of eigenvectors and eigenvalues of the graph Laplacian matrix~\cite{Hammond'11, Sunil'12, ShumanSPM, Moura}. This spectral interpretation captures the oscillatory behavior of the graph signal~\cite{nodalThm} and thus  allows us to extend of the concept of frequency to irregular domains. This has led to the design of frequency selective filtering operations on graphs similar to that in traditional signal processing. These graph spectral filters also have a vertex domain implementation.

In this paper, we interpret the BF as a $1$-hop localized transform on the aforementioned graph. Because the link weights in the graph are data adaptive, the problem of structure preserving filtering boils down to low pass spectral filtering on that graph. We show that the BF can be characterized by a spectral response corresponding to a linear spectral decay. We also calculate the spectral response of iterated BF. We extend this novel insight to build better and more general bilateral-like filters using the machinery to design graph based transforms with desired spectral response. Our design allows one to choose the spectral response of the filter depending on the application which offers more flexibility. We also give a theoretical justification for the design using the framework of regularization on graphs. We show that these spectral filters do not require computationally expensive diagonalization of the graph Laplacian matrix and then provide an efficient algorithm for implementing these filters using 
the BF as a building block. We examine the performance of the proposed filters in a few applications.

\section{Bilateral Filter as a Graph Based Transform}
Consider an input image $\mb{x}_{in}$ to the BF. The value at each position in the output image $\mb{x}_{out}$ is given by the weighted average of the pixels in $\mb{x}_{in}$.
\begin{equation}
\mb{x}_{out}[j] = \sum_i \frac{w_{ij}}{\sum_i w_{ij}} \mb{x}_{in}[i]
\label{eq:BF}
\end{equation}   
The weights $w_{ij}$ depend on both the euclidean and photometric distance between the pixels $\mb{x}_{in}[i]$ and $\mb{x}_{in}[j]$. Let $p_i$ denote the position of the pixel $i$. The weights are then given by
\begin{equation}
w_{ij} = \exp{\left(-\frac{\| p_i - p_j \|^2}{2\sigma_d^2}\right)} . \exp{\left(-\frac{(\mb{x}_{in}[i] - \mb{x}_{in}[j])^2 }{2\sigma_r^2}\right)}
\end{equation}
Spatial Gaussian weighting decreases the influence of distant pixels\footnote{Eq. (\ref{eq:BF}) shows that the averaging is done over all pixels. However in practice, one assumes non-zero weights only for the pixels which have $\|p_i - p_j \| \leq 2\sigma_s$~\cite{ParisBF}.} and intensity Gaussian weighting decreases the influence of pixels with different intensities. Intuition is that only similar nearby pixels should get averaged so that blurring of edges is avoided.

Now, consider an undirected graph $G = (\mc{V}, E)$ where the nodes $\mc{V} = \{1,2,\ldots,n\}$ are the pixels of the input image and the edges $E = \{(i,j,w_{ij})\}$ capture the similarity between two pixels as given by the BF weights (Figure~\ref{fig:BFgraph}). Image $\mb{x}_{in}$ can be considered as a signal defined on this graph $\mb{x}_{in}:\mc{V} \rightarrow \mathbb{R}$ where the signal value at each node equals the corresponding pixel intensity. 
\begin{wrapfigure}{r}{0.15\textwidth}
\centering
\includegraphics[width=0.6in]{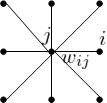}
\caption{The BF graph}
\label{fig:BFgraph}
\end{wrapfigure}
Adjacency matrix $\mb{W}$ of this graph is given by $\mb{W} = [w_{ij}]_{n\times n}$. Let $\mb{D}$ be the diagonal degree matrix where each diagonal element $\mb{D}_{jj} = \sum_i w_{ij}$. With this notation the filtering operation in \eqref{eq:BF} can be written as~\cite{Milanfar'13}
\begin{equation}
\mb{x}_{out} = \mb{D}^{-1} \mb{W} \mb{x}_{in}
\label{eq:BFmtx}
\end{equation}
It can be seen from \eqref{eq:BF} that the output at each node in the graph depends only on the nodes in its $1$-hop neighborhood. So, the BF is a $1$-hop localized graph based transform. Also, note that the BF includes the current pixel in the weighted average. So, the graph corresponding to the BF has a {\em self loop} i.e. an edge connecting each node to itself with weight $1$. 
Other filtering techniques such as Gaussian smoothing and non-local means can also be described using similar graphical models~\cite{Milanfar'13, Peyre'08}. 

\subsection{Graph Spectrum and Data Adaptivity of the BF}
Spectrum of a graph is defined in terms of the eigenvalues and eigenvectors of its Laplacian matrix. The combinatorial Laplacian matrix for the graph $G$ is defined $\mb{L} = \mb{D} - \mb{W}$. We use the normalized form of the Laplacian matrix given as $\mcb{L} = \mb{D}^{-1/2}\mb{L} \mb{D}^{-1/2}$. $\mcb{L}$ is a non-negative definite matrix~\cite{Chung'97}. As a result $\mcb{L}$ has an orthogonal set of eigenvectors $\mb{U} = \{\mb{u}_1,\ldots,\mb{u}_2\}$ with corresponding eigenvalues $\sigma(G) = \{\lambda_1,\ldots,\lambda_n\}$. So, $\mcb{L}$ can be diagonalized as
\begin{equation}
\mcb{L} = \mb{U}\mb{\Lambda}\mb{U}^t
\end{equation}
where $\mb{\Lambda} = diag\{\lambda_1,\ldots,\lambda_n\}$.

Similar to classical Fourier transform, the eigenvectors and eigenvalues of the Laplacian matrix $\mcb{L}$ provide a spectral interpretation 
of the graph signals. The eigenvalues $\{\lambda_1,\ldots,\lambda_n\}$ can be treated as graph frequencies, and are always situated in the interval $[0,2]$ on the real line. The eigenvectors of the Laplacian matrix demonstrate increasing oscillatory behavior as the magnitude of the graph frequency increases~\cite{nodalThm}. The {\em Graph Fourier Transform} (GFT) of a signal $\mb{x}$ is defined as its projection onto the eigenvectors of the graph, i.e., $\tilde{x}(\lambda_i) = \langle\mb{x},~\mb{u}_i\rangle$, or in matrix form $\tilde{\mb{x}} = \mb{U}^t\mb{x}$. The inverse GFT is given by $\mb{x} = \mb{U} \tilde{\mb{x}}$.

Figure~\ref{fig:BFgft} shows the fraction of total signal energy captured by the first $k$ spectral components of a graph signal corresponding to a $64\times 64$ image block. In this example we consider the following underlying graphs (1)~the BF graph (2)~the graph corresponding to Gaussian smoothing where the weights depend only on the geometric distance between two pixels.
We can see that due to the data adaptivity of the BF graph, most of the signal energy is captured in low frequency part of the BF graph spectrum in comparison with spectrum of Gaussian smoothing graph. So, the spectral basis of the BF graph offers better energy compaction for the given signal. 
\begin{figure}
\centering
\includegraphics[width=0.25\textwidth]{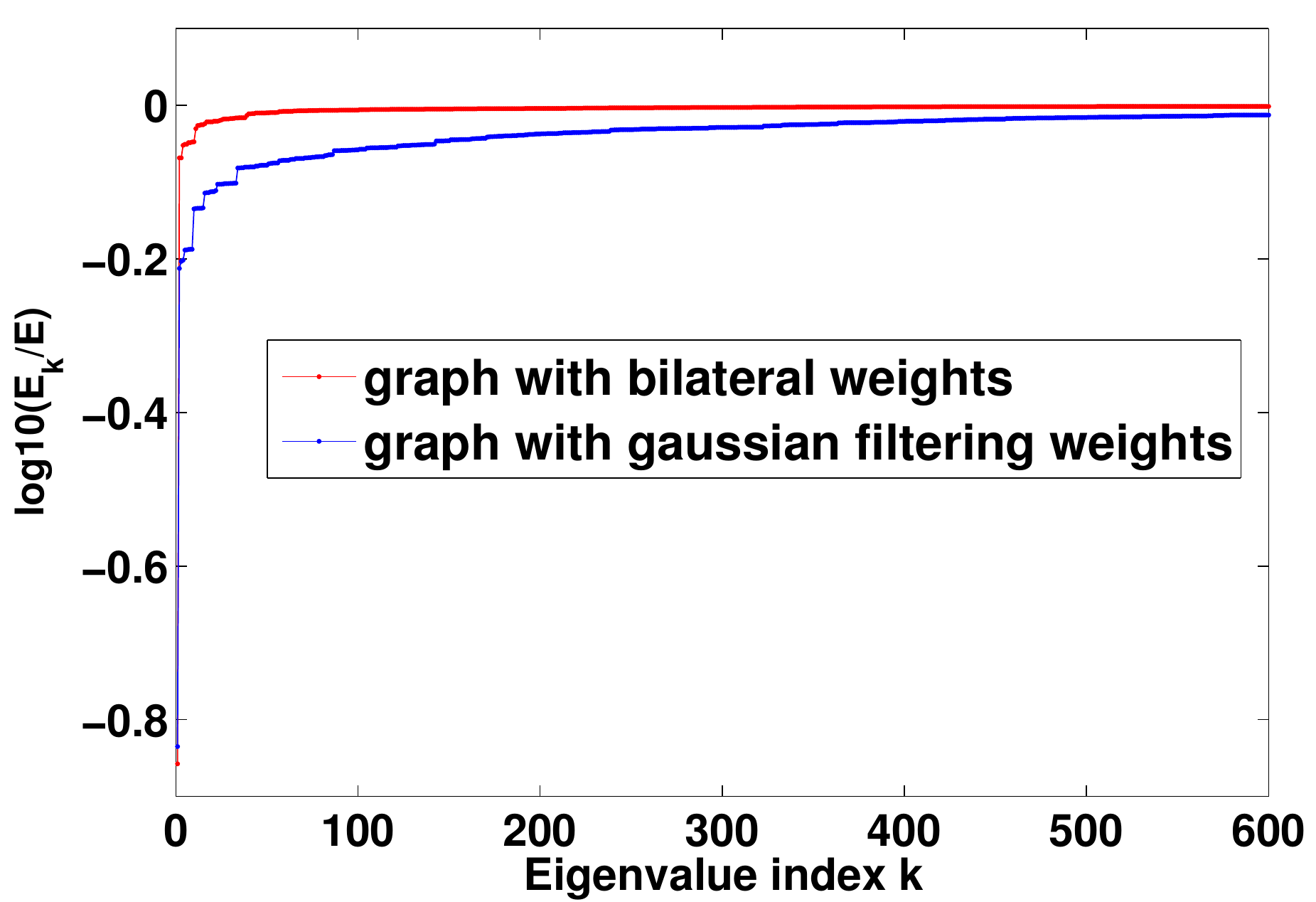}
\caption{$E_k$ is the fraction of total energy captured by the first $k$ spectral components.}
\label{fig:BFgft}
\end{figure}

\section{Spectral interpretation of the bilateral filter}	
Similar to conventional signal processing, {\em graph spectral filtering} is defined as
\begin{equation}
\tilde{x}_{out}(\lambda_i) = h(\lambda_i) \tilde{x}_{in}(\lambda_i)
\label{eq:gsf}
\end{equation}
$h(\lambda_i)$ is the spectral response of the filter according to which spectral components of an input signal are modulated. Using the definition of GFT and the diagonalized form of $\mcb{L}$, we can write graph spectral filtering in matrix notation as~\cite{ShumanSPM}
\begin{equation}
\mb{x}_{out} = \underbrace{\mb{U}}_{\substack{\text{Inverse}\\\text{GFT}}} \underbrace{h(\mb{\Lambda})}_{\substack{\text{Spectral}\\\text{response}}} \underbrace{\mb{U}^t \mb{x}_{in}}_{\text{GFT}} = h(\mcb{L}) \mb{x}_{in}
\label{eq:gtf}
\end{equation}

To exploit this framework of graph spectral filtering we rewrite the BF in \eqref{eq:BFmtx} as
\begin{align}
&\mb{x}_{out} = \mb{D}^{-1/2} \mb{D}^{-1/2} \mb{W} \mb{D}^{-1/2} \mb{D}^{1/2}\mb{x}_{in} \nonumber\\ 
\Rightarrow &\mb{D}^{1/2}\mb{x}_{out} = (\mb{I}-\mcb{L})\mb{D}^{1/2} \mb{x}_{in}
\label{eq:toBFsr}
\end{align}
From this equation, we can see that the BF is a graph transform, similar to the one in \eqref{eq:gtf}, operating on the normalized input signal $\mb{\hat{x}}_{in} =\mb{D}^{1/2}\mb{x}_{in}$ producing the normalized output $\mb{\hat{x}}_{out} = \mb{D}^{1/2}\mb{x}_{out}$.  This normalization allows us to define the BF in terms of the non-negative definite matrix $\mcb{L}$ and thus have a spectral interpretation. It also ensures that a constant signal when normalized, is an eigenvector of $\mcb{L}$ associated with zero eigenvalue~\cite{SunilBior}. Following \eqref{eq:gtf} we have,
\begin{equation}
\mb{\hat{x}}_{out} = \mb{U} (\mb{I}-\mb{\Lambda}) \mb{U}^t \mb{\hat{x}}_{in}
\label{eq:BFsr}
\end{equation}
This shows that the BF is a frequency selective graph transform with a spectral response $h_{BF}(\lambda_i) = 1-\lambda_i$ which corresponds to linear decay (See Figure \ref{fig:spRes}).
\begin{figure}
\centering
\includegraphics[width=0.25\textwidth]{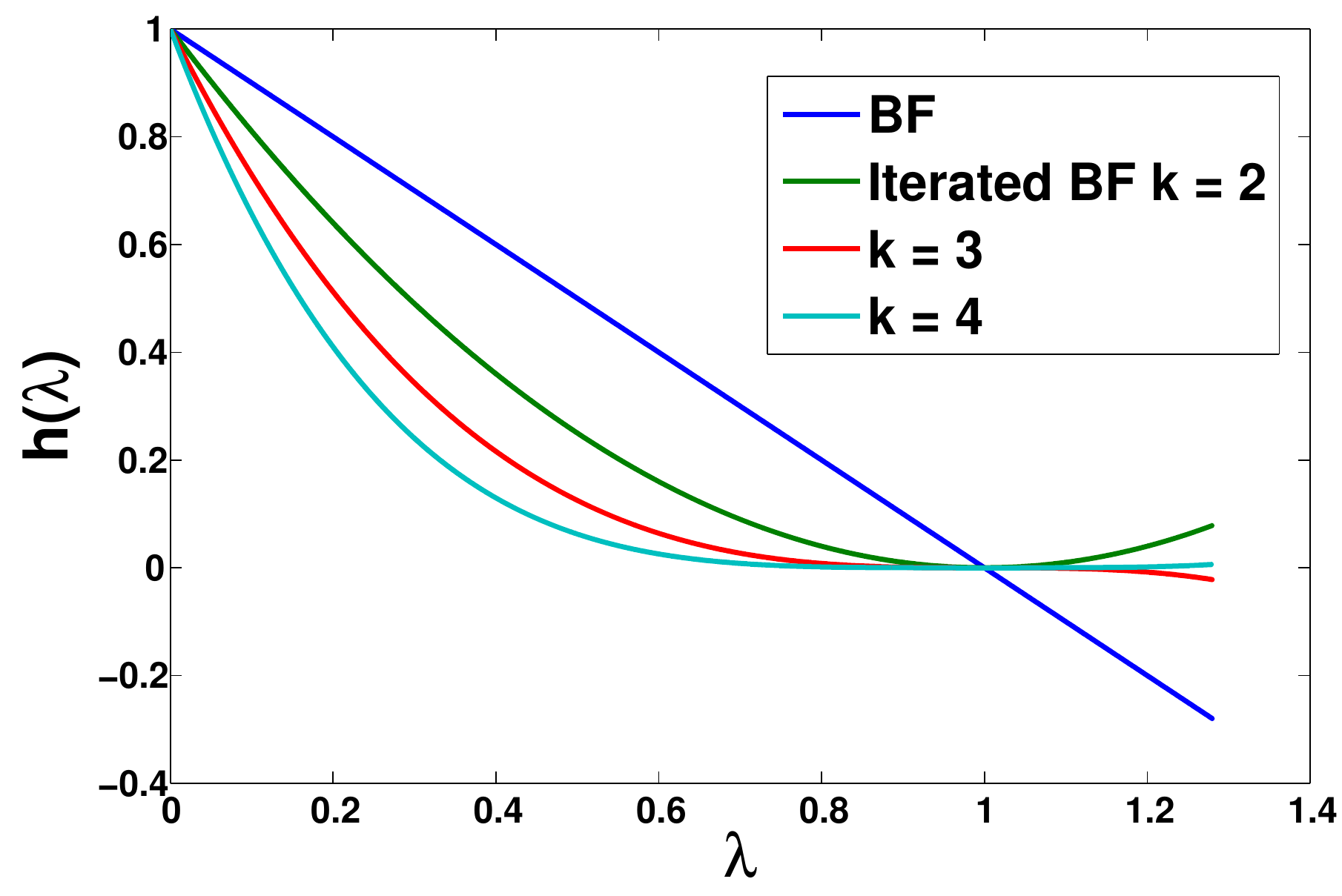}
\caption{Spectral responses of the BF and iterated BF.}
\label{fig:spRes}
\end{figure} 
The BF tries to preserve the low frequency components and attenuate the high frequency components. 

The BF is used iteratively in many applications. There are two ways to iterate the BF (1)~by changing the weights at each iteration using the result of previous iteration (2)~by using fixed weights at each iteration as calculated from the initial image. In the first method the BF graph changes in every iteration. So we cannot have a spectral interpretation. In the second method the graph remains fixed at every iteration. Here we consider the second method. Iterating preserves strong edges while removing weaker details. This type of effect is desirable for applications such as stylization~\cite{ParisBF}. The BF iterated $k$-times can be written in matrix notation as
\begin{equation}
\mb{x}_{out} = \left(\mb{D}^{-1} \mb{W}\right)^k \mb{x}_{in} = \left(\mb{I} - \mcb{L}_r \right)^k \mb{x}_{in}
\label{eq:BFiter}
\end{equation}
where $\mcb{L}_r = \mb{D}^{-1} \mb{L}$ is called the random walk Laplacian matrix. It can be shown that any graph transform $h(\mcb{L}_r)$ can be written in terms of $\mcb{L}$ as $h(\mcb{L}_r) = \mb{D}^{-1/2}h(\mcb{L})\mb{D}^{1/2}$~\cite[Proposition~2]{SunilBior}. Using this fact, we can rewrite \eqref{eq:BFiter} as
\begin{equation}
\mb{\hat{x}}_{out} = \mb{U} (\mb{I}-\mb{\Lambda})^k \mb{U}^t \mb{\hat{x}}_{in}
\label{eq:BFiterS}
\end{equation}
The spectral responses corresponding $k=2,3,4$ are shown in Figure \ref{fig:spRes}. The figure suggests that iterative application of the BF suppresses more of the high frequency component which is consistent with the observation. Equations \eqref{eq:BFsr} and \eqref{eq:BFiterS} give a different perspective to look at the bilateral filter. They hint at filter designs with better spectral responses that can be tailored to particular applications.

\section{Application specific spectral designs}
The BF and iterated BF have fixed spectral responses. But these responses may not be suitable for all applications. Below we discuss two applications (1)~image denoising and (2)~segmentation to illustrate design of more flexible spectral filters. 
\paragraph*{Denoising.}
We consider the problem of image denoising with additive zero-mean white noise. 
\begin{equation}
\mb{y}[i] = \mb{x}[i] + \mb{e}[i]
\end{equation}
where $\mb{x}$ is the original image that we want to estimate, $\mb{y}$ is the observed noisy image and $\mb{e}$ is zero-mean white noise with variance $\sigma^2$. As explained before most of the signal energy lies in the low frequency part of the BF graph spectrum due to its data adaptivity. So most energy in the high frequency spectrum corresponds to noise. We can put this intuition in a more principled framework of regularization where the problem of denoising is equivalent to minimization of a penalty functional~\cite{Elad'02}. This penalty functional is composed of two terms. The first term is a fit measure and the second term is a data dependent smoothness constraint as captured  by regularization operators on the BF graph~\cite{Smola'03, Zhou'04}.
\begin{equation}
C(\mb{\hat{x}}) = \frac{1}{2}\|\mb{\hat{y}}-\mb{\hat{x}}\|^2 + \frac{\rho}{2}\|h_p(\mcb{L})\mb{\hat{x}}\|^2
\end{equation}
Note that we normalize $\mb{x}$ and $\mb{y}$ as in~\eqref{eq:toBFsr}. The regularization functional can be written in spectral domain as
\begin{equation}
\|h_p(\mcb{L})\mb{\hat{x}}\|^2 = \sum_{i=1}^n [h_p(\lambda_i)]^2[\tilde{\hat{x}}(\lambda_i)]^2
\end{equation}
$h_p(\lambda) \geq 0$ is chosen to be a non-decreasing function in $\lambda$ so that the high frequency components are penalized more strongly. Putting $\partial C/\partial \mb{\hat{x}} = 0$ we get the optimal $\mb{\hat{x}}$ as
\begin{align}
\mb{\hat{x}}_{opt} &= (\mb{I} + \rho h_p^2(\mcb{L}))^{-1}\mb{\hat{y}}\nonumber \\
				   &= \mb{U} (\mb{I} + \rho h_p^2(\mb{\Lambda}))^{-1} \mb{U}^t \mb{\hat{y}}
\end{align}
So, for a chosen regularization functional $h_p(\lambda)$, the spectral response of the denoising filter is given by
\begin{equation}
h_{opt}(\lambda) = \frac{1}{1 + \rho h_p^2(\lambda)}
\label{regfilt}
\end{equation}
Denoising filters suggested by the regularization framework are essentially low pass filters in the spectral domain of the graph (Figure~\ref{fig:denoise}). The BF has a relatively poor spectral decay profile due to which it blurs the textures in the denoising process. 
\paragraph*{Image Segmentation.}
Shi and Malik~\cite{MalikNormCut} formulated the problem of image segmentation as a graph partitioning problem on a graph similar to the BF graph. To obtain an $m$-way partition of the graph, we only need to find the projection of the graph signal on the first few eigenvectors with the smallest eigenvalues. This projection gives a very coarse version of the signal which then can be used to perform image segmentation. So a suitable filter for this application should be a low pass filter in the graph spectral domain with a small cut off frequency and sharp transition band (Figure~\ref{fig:segment}). Iterative application of the BF also gives a coarse version of the image with details removed. But it favours the eigenvector with the smallest eigenvalue~\cite{graphCutFilter} which corresponds to the DC component of the image as seen from its spectral response. On the other hand, for graph partitioning we need the eigenvector with second smallest eigenvalue (and eigenvectors with larger eigenvalues for finer 
partitions).  

To summarize, different applications require filters having different spectral responses $h(\lambda)$. So, we would like to design more general bilateral-like filters with desired spectral response. These filters are of the form $h(\mcb{L}) = \mb{U} h(\mb{\Lambda}) \mb{U}^t$. A direct implementation of these filters requires diagonalization of $\mcb{L}$ which is of the order $O(N^3)$. For large graphs such as the one considered here, this is computationally very expensive. Fortunately, we can approximate any spectral response $h(\lambda)$ by a polynomial in $\lambda$. These polynomial spectral filters are $k$-hop localized on the graph where $k$ is the degree of the polynomial. This leads to an easy and efficient implementation scheme for these filters as explained in the next section.

\section{Polynomial approximation and fast implementation}
The spectral response of the iterative bilateral filter (Figure~\ref{fig:fastSF}(a)) given in (\ref{eq:BFiterS}) is a degree $k$ polynomial. This is a special case of a general class of real polynomials of degree $k$ given as
\begin{eqnarray}
 h(\Lam) = r_0\prod_{i=1}^k (\Id - r_i \Lam),
 \label{eq:poly_expansion_mat}
\end{eqnarray}
where the roots $r_i$ can be either real or complex conjugate pairs.
This generalization allows $k+1$ degrees of freedom in choosing the spectral response of the filter.
Further, the corresponding transform $\Hm$ in pixel domain is a  matrix polynomial of $\Lcb_r$ with the same roots as in (\ref{eq:poly_expansion_mat}) i.e.,
\begin{eqnarray}
 \Hm =  \Dm^{-1/2}\Um h(\Lam) \Um^t\Dm^{1/2} =  r_0\prod_{i = 1}^{k}(\Id - r_i\Lcb_r) .
 \label{eq:poly_expansion_L}
\end{eqnarray}
\begin{figure}[htb]
\begin{center}
 \subfigure[]{
   \includegraphics[width = 2.5in] {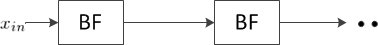}
 }
\subfigure[]{
   \includegraphics[width = 2.5in] {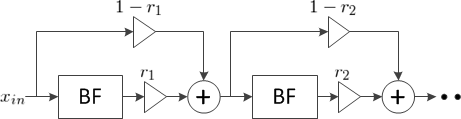}
 }
\caption{(a)~Iterated BF (b)~Fast implementation of a bilateral-like filter with polynomial spectral response using BF as a building block}
\label{fig:fastSF}
\end{center}
\end{figure}
This leads to following result:
\begin{theorem}
Any graph filter on an image having a polynomial spectral response of degree $k$ can be implemented in the pixel domain as an iterative $k$ step bilateral filter operation. 
\end{theorem}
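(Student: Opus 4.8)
The plan is to work directly with the pixel-domain form of the transform established in~\eqref{eq:poly_expansion_L}, namely $\Hm = r_0\prod_{i=1}^{k}(\Id - r_i\Lcb_r)$, and to realize this product as a cascade of $k$ elementary stages, each costing exactly one bilateral filtering pass. The crucial observation is that a single BF application is, by~\eqref{eq:BFmtx}, the linear map $\mb{x}\mapsto\Dm^{-1}\Wm\mb{x}$, and since $\Lcb_r = \Dm^{-1}\Lm = \Id - \Dm^{-1}\Wm$, the BF operator is precisely $\Dm^{-1}\Wm = \Id - \Lcb_r$. Each factor of $\Hm$ therefore decomposes as
\begin{equation}
\Id - r_i\Lcb_r = (1-r_i)\Id + r_i(\Id - \Lcb_r) = (1-r_i)\Id + r_i\Dm^{-1}\Wm,
\end{equation}
that is, a scalar-weighted sum of a signal and its bilateral-filtered version.

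First I would introduce the recursion $\mb{z}_0 = \mb{x}_{in}$ and, for $i=1,\ldots,k$,
\begin{equation}
\mb{z}_i = (1-r_i)\mb{z}_{i-1} + r_i\Dm^{-1}\Wm\mb{z}_{i-1},
\end{equation}
and output $r_0\mb{z}_k$. By construction $\mb{z}_k = \prod_{i=1}^{k}(\Id - r_i\Lcb_r)\mb{x}_{in}$, so $r_0\mb{z}_k = \Hm\mb{x}_{in}$. Each stage performs the single bilateral filtering operation $\Dm^{-1}\Wm\mb{z}_{i-1}$ followed by an inexpensive scalar combination with the unfiltered signal; no explicit diagonalization of $\mcb{L}$ and no eigenvector matrix $\Um$ is ever formed. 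Hence the whole filter is computed in exactly $k$ BF steps, which is the assertion.

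The step I expect to be the main obstacle is the case of complex roots: when $r_i$ is complex the factor $(\Id - r_i\Lcb_r)$ sends a real image to a complex signal, so the stage-by-stage recursion above does not stay within real-valued bilateral filtering. I would resolve this using two facts. Since every factor is a polynomial in $\Lcb_r$, the factors commute and the cascade can be reordered arbitrarily; moreover, by hypothesis the complex roots occur in conjugate pairs $r_i,\bar r_i$, whose product
\begin{equation}
(\Id - r_i\Lcb_r)(\Id - \bar r_i\Lcb_r) = \Id - 2\Re(r_i)\Lcb_r + |r_i|^2\Lcb_r^2
\end{equation}
is a real quadratic in $\Lcb_r$, hence a real quadratic in $\Dm^{-1}\Wm$. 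Grouping conjugate roots into such blocks, each block is evaluated by two successive real BF passes $\mb{z}\mapsto\Dm^{-1}\Wm\mb{z}\mapsto(\Dm^{-1}\Wm)^2\mb{z}$ combined with real scalars, keeping every intermediate signal real. Counting one BF pass per real linear factor and two per conjugate-pair block, the total number of bilateral filtering operations is again $k$, so the real-valued implementation runs in exactly $k$ BF steps as claimed.
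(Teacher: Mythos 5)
Your proposal follows essentially the same route as the paper: the same factorization $\Hm = r_0\prod_{i=1}^{k}(\Id - r_i\Lcb_r)$ from~\eqref{eq:poly_expansion_L}, the same per-factor decomposition $\Id - r_i\Lcb_r = (1-r_i)\Id + r_i\Dm^{-1}\Wm$, and the same cascade of $k$ such stages. The one place you go beyond the paper is the complex-root case: the factorization in~\eqref{eq:poly_expansion_mat} explicitly allows complex conjugate pairs, yet the paper's proof only exhibits the real single-factor stage, whereas your grouping of conjugate pairs into real quadratic blocks $\Id - 2\Re(r_i)\Lcb_r + |r_i|^2\Lcb_r^2$, evaluated by two real BF passes each, keeps every intermediate signal real while still totaling $k$ passes --- so your argument actually closes a gap the paper's proof leaves implicit.
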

\begin{proof}
For $k = 1$, the filter $\Hm = \Id - r_1\Lcb = (1-r_1)\Id + r_1 \Dm^{-1}\Wm$. The output in this case is given in (\ref{eq:partial_BF}) whose convergence rate is $r_1$ times the convergence rate of the original bilateral filter $\Dm^{-1}\Wm$.
\begin{equation}
\yv = \Hm \xv = (1-r_1)\xv + r_1  \Dm^{-1}\Wm\xv,
\label{eq:partial_BF}
\end{equation}
For $k >1$, the transform $\Hm$ is a cascaded form of $k$ such bilateral filtering operations (See Figure~\ref{fig:fastSF}(b)). 
\end{proof}
Further, if $h(\lambda)$ is not a polynomial of $\lambda$, it can be approximated with a polynomial kernel which can then be implemented as a generalized iterative bilateral filtering operation. 
It has been shown in \cite{Hammond'11} 
that minimax polynomial approximation of any kernel $h(\lambda)$ not only minimizes the Chebychev norm (worst-case norm) 
of the error between kernel and its approximation, it also minimizes the upper-bound on the error $||H^{\text{exact}}- H^{\text{approx}}||$ 
between exact and approximated filters. In our experiments, we approximate any non-polynomial $h(\lambda)$
with the truncated Chebychev polynomials (which are a good
approximation of minimax polynomials).
\section{Examples}
\begin{figure}[htb]
\begin{center}
 \subfigure[]{
   \includegraphics[width = 2.5in] {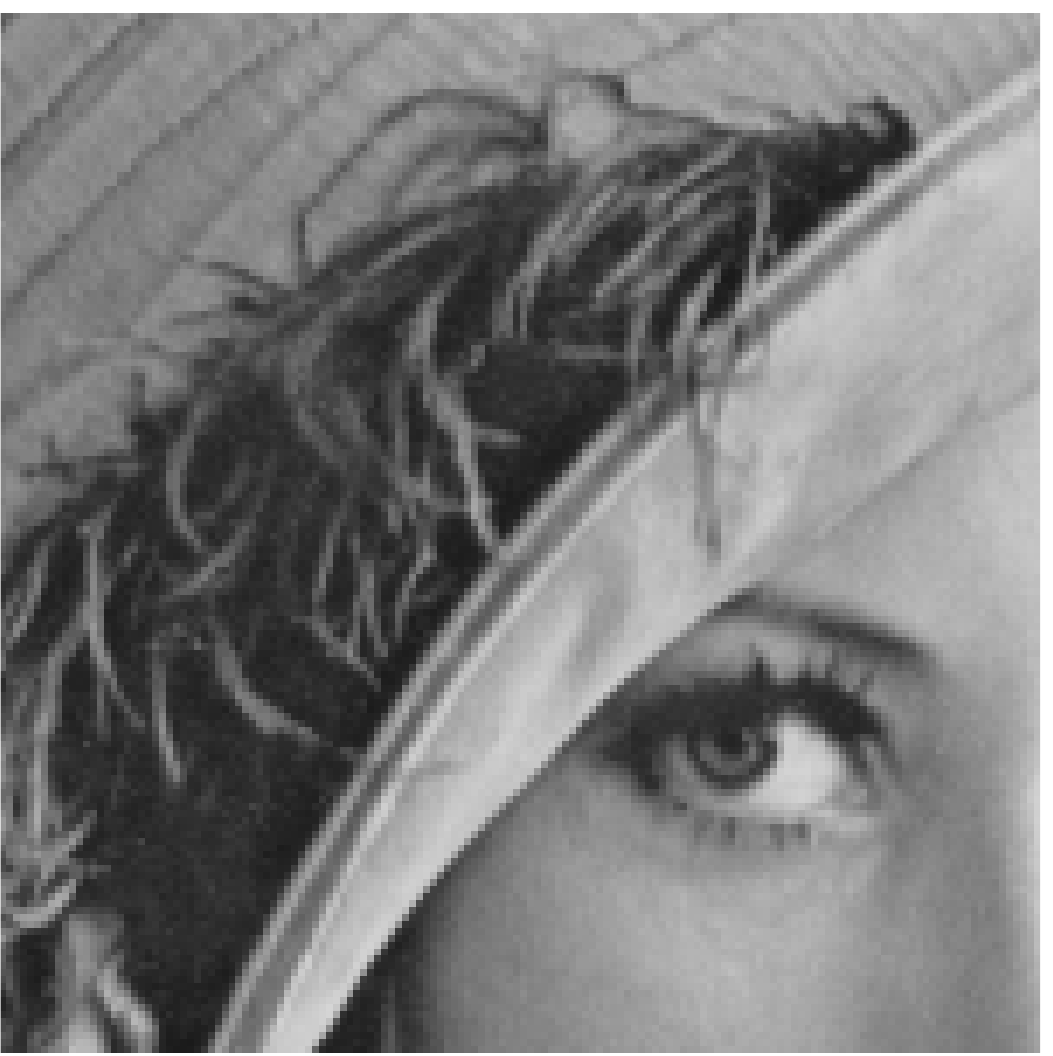}
 }
\subfigure[]{
   \includegraphics[width = 2.5in] {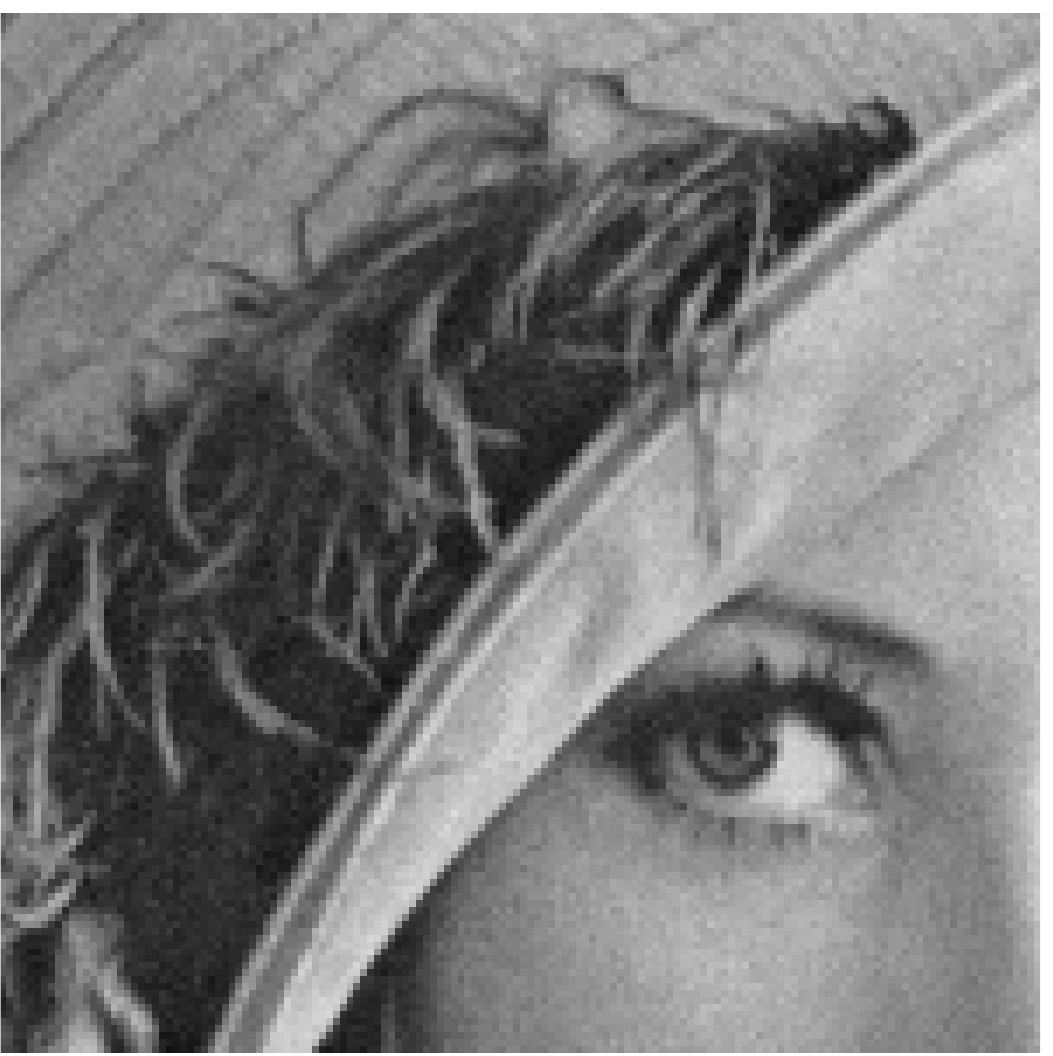}
 }\\
 \subfigure[]{
   \includegraphics[width = 2.5in] {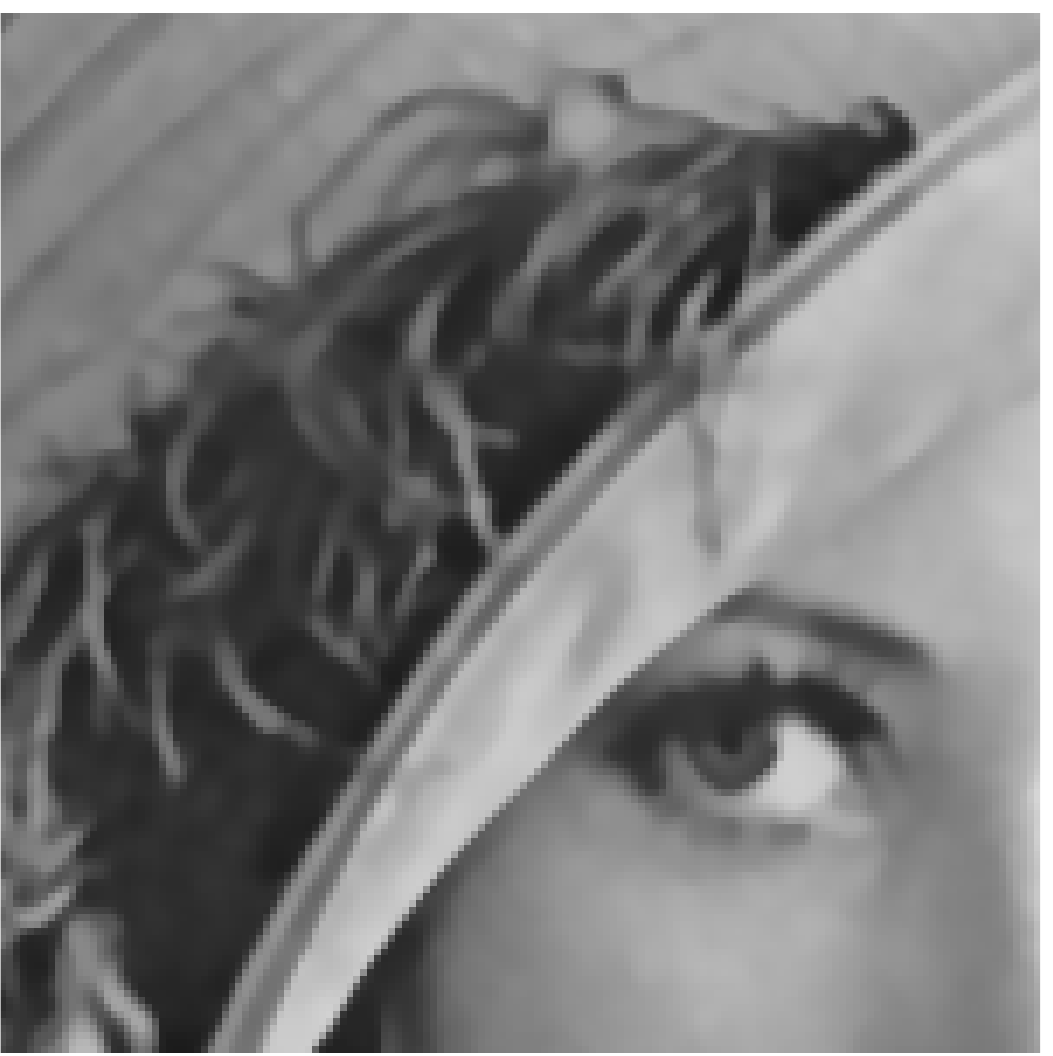}
 }
\subfigure[]{
   \includegraphics[width = 2.5in] {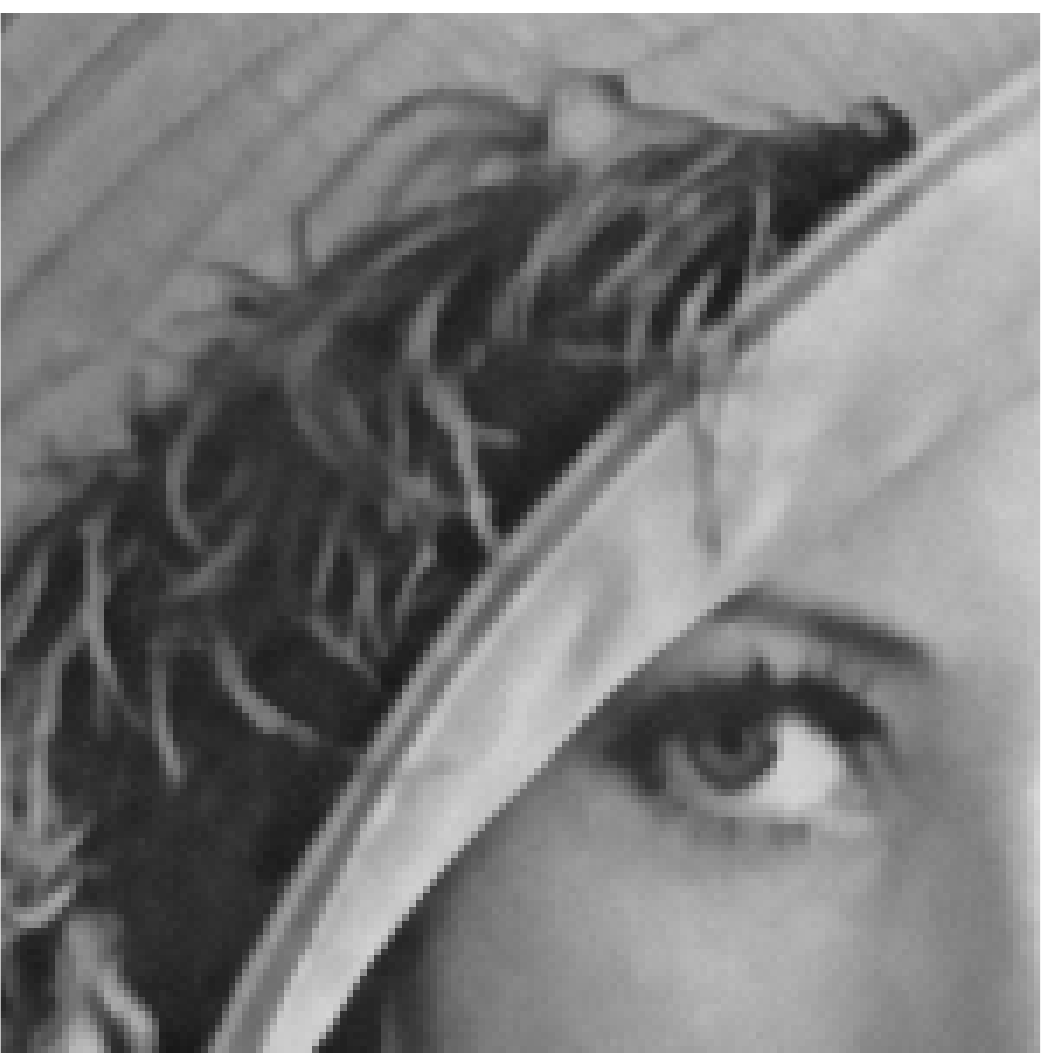}
 }\\
 \subfigure[]{
   \includegraphics[width = 2.5in] {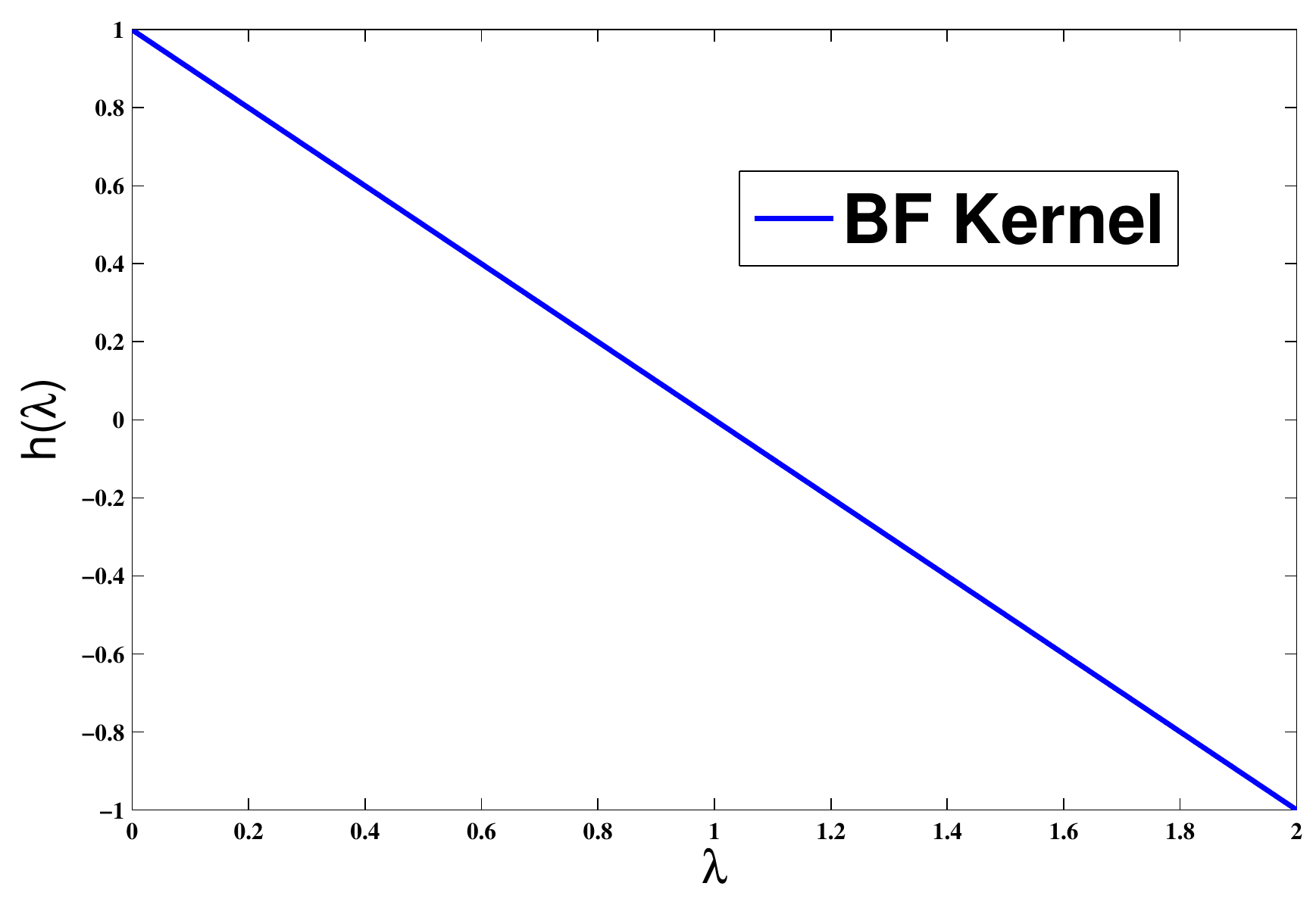}
 }
 \subfigure[]{
   \includegraphics[width = 2.5in] {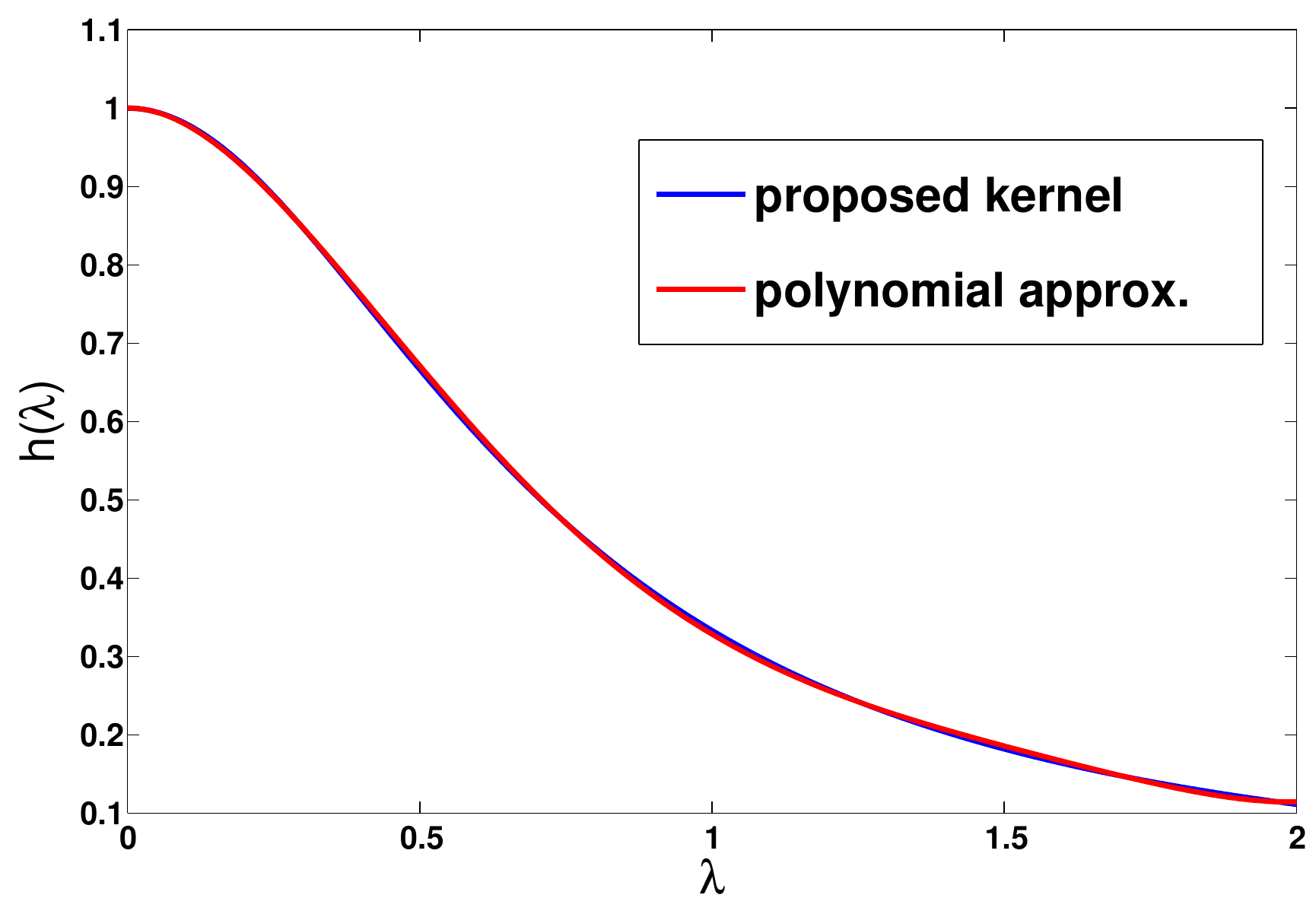}
 }
\caption{(a)~Original image (b)~Noisy image, SNR = 20 dB (c)~Output of the BF($\sigma_r = 0.035, \sigma_d = 2$), SNR = 20.65 dB (d)~Output of the proposed filter, SNR = 22.64 dB (e)~Spectral response of the BF (f)~Spectral response of proposed filter}
\label{fig:denoise}
\end{center}
\end{figure}

We examine the performance of the proposed spectral design of bilateral-like filters in two applications. First, we consider 
the image denoising problem. 
We experiment with a spectral response obtained 
by the regularization framework 
in Section 4. We take the regularization functional 
$h_p(\lambda) = \lambda^2$ which suggests a denoising filter with 
$h(\lambda) = 1/(1+\lambda^2)$. We take its $5$ degree 
polynomial approximation. Figure~\ref{fig:denoise} shows 
the denoising results using this filter and the BF \footnote{The BF is not the best denoising filter available. 
We use it in our comparison to emphasize the qualitative differences in filtering results.}. 
It can be seen that the BF preserves edges, but it blurs the texture in the denoising process 
while the proposed denoising filter does a better job of preserving texture. This is also reflected in the SNR values. 

Next, we consider an iterative application of the BF. Iterated BF removes minor details from the image while preserving prominent edges. This can be used as an effective preprocessing step in edge-detection and segmentation etc. As stated before, iterated BF favours the DC component which is not useful for segmentation. We use a low pass spectral kernel with small cut-off and sharp transition band so that the second (and a few higher) spectral components get at least as much weight as the DC component. We use a $20$ degree polynomial approximation of this kernel and compare it with $20$ iterations of the BF. Figure~\ref{fig:segment} shows that weak edges are blurred more with iterated BF (as expected from the spectral response) compared to the proposed filter.
\begin{figure}[htb]
\begin{center}
 \subfigure[]{
   \includegraphics[width = 2.5in] {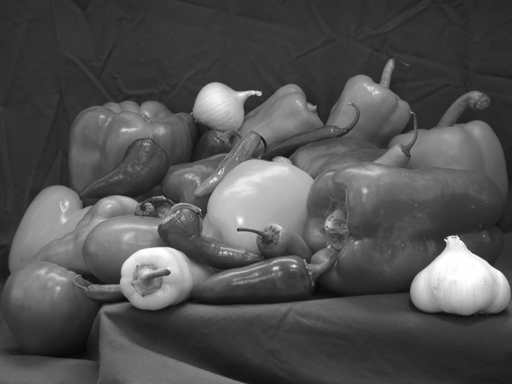}
 }
 \subfigure[]{
   \includegraphics[width = 2.5in] {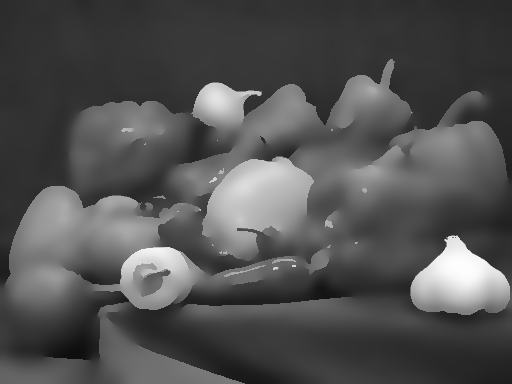}
 }\\
\subfigure[]{
   \includegraphics[width = 2.5in] {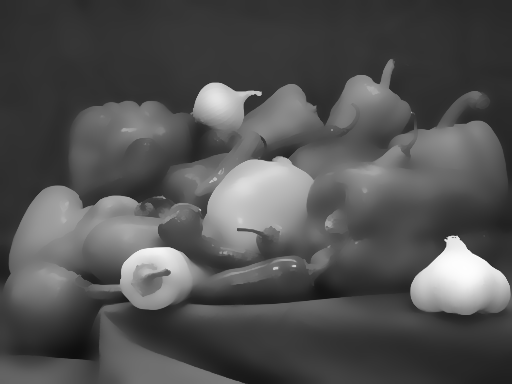}
 }
\subfigure[]{
   \includegraphics[width = 2.5in] {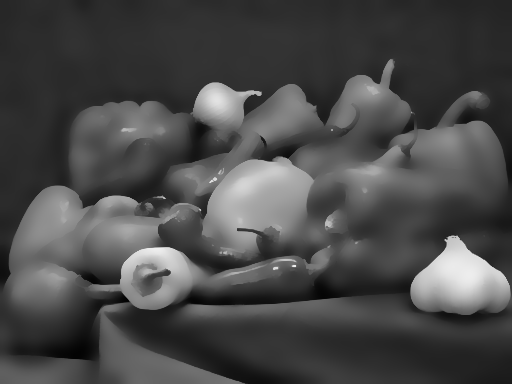}
 }\\
\subfigure[]{
   \includegraphics[width = 2.5in] {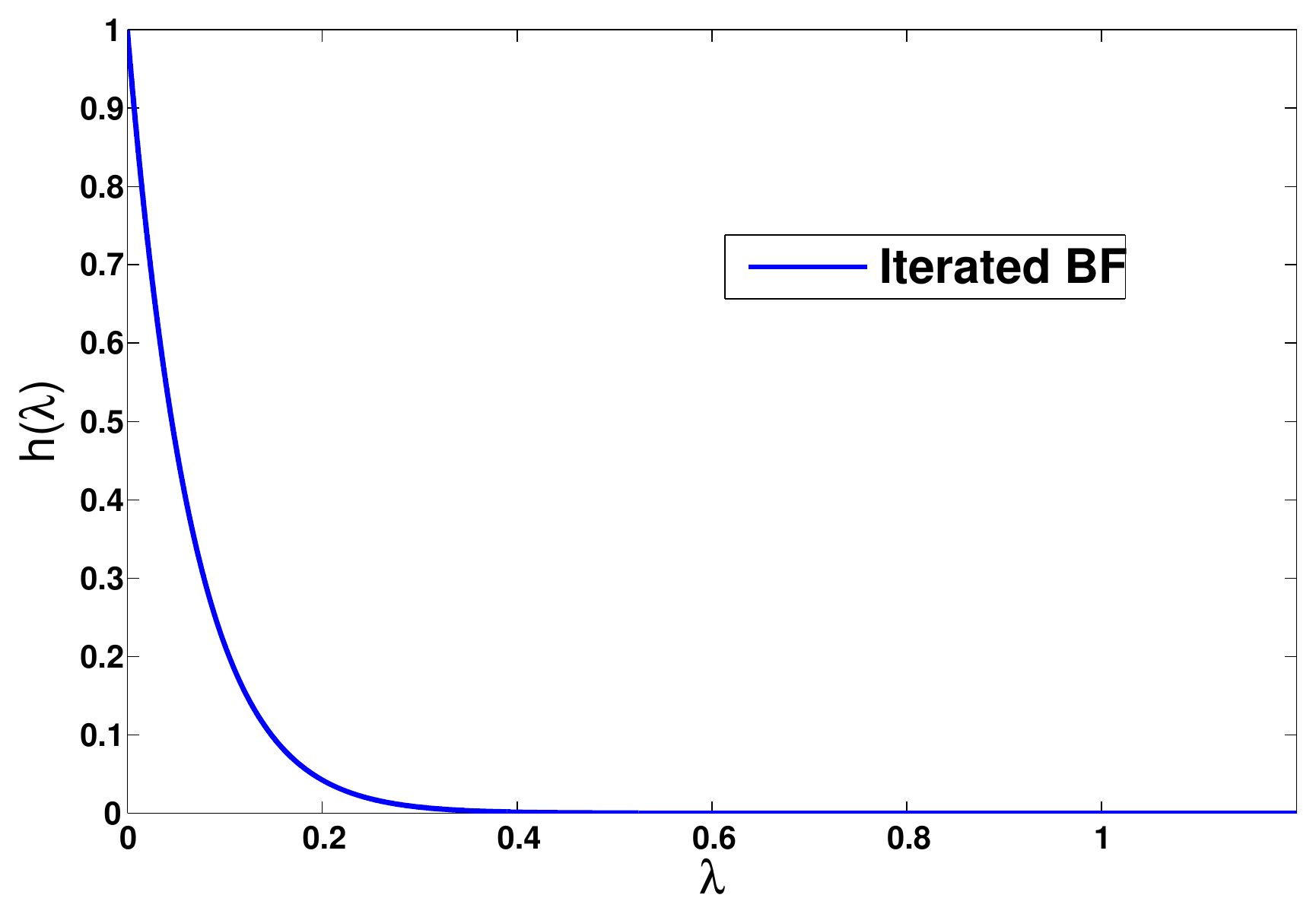}
 } 
\subfigure[]{
   \includegraphics[width = 2.5in] {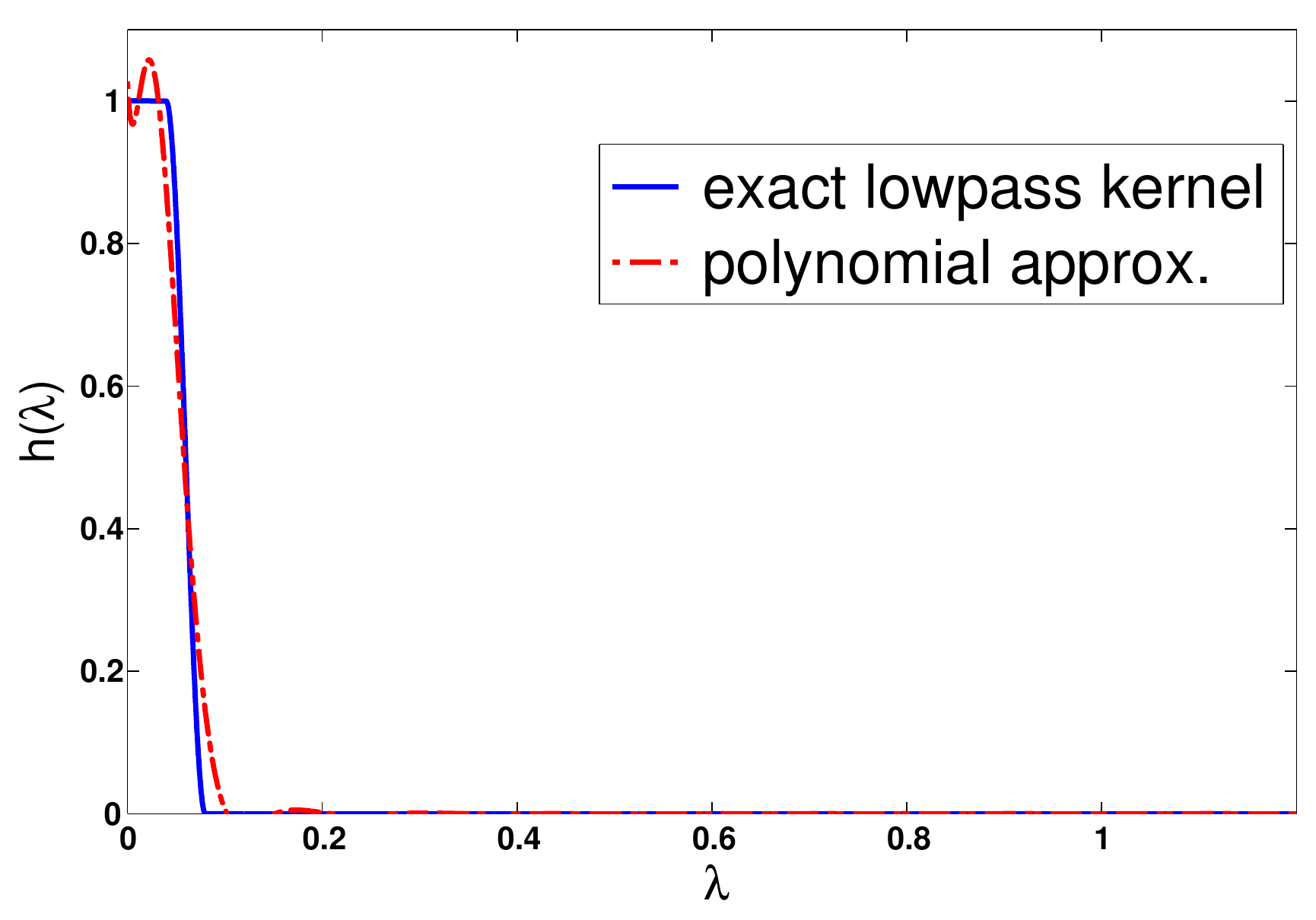}
 }
\caption{(a)~Original image ({\em peppers}) (b)~Output of 20 iteration of the BF with changing weights (c)~Output of 20 iteration of the BF with fixed weights($\sigma_r = 0.05, \sigma_d = 2$) (d)~output of the proposed spectral filter (e)~Iterated BF's spectral response (f)~Proposed spectral response} 
\label{fig:segment}
\end{center}
\end{figure}
\section{Conclusion}
In this paper we explained the bilateral filter as a graph spectral filtering operation. With this novel perspective, we proposed a family of more flexible bilateral-like filters with desired spectral responses. We gave an easy implementation scheme for these filters. Their utility was motivated through few examples. An immediate interesting extension to this work would be to explore different spectral filters suitable for particular applications. Another topic of interest is the design of filter banks using these bilateral-like filters.

\bibliographystyle{IEEEbib}
\bibliography{BFrefs}
\end{document}